\documentclass{article}

\usepackage{neurips_2026}

\usepackage[utf8]{inputenc} 
\usepackage[T1]{fontenc}    
\usepackage{hyperref}       
\usepackage{url}            
\usepackage{booktabs}       
\usepackage{amsfonts}       
\usepackage{nicefrac}       
\usepackage{microtype}      
\usepackage{xcolor}         

\usepackage{hyperref}
\usepackage{url}
\usepackage{booktabs}

\usepackage{placeins}
\usepackage{float}
\usepackage{amsmath}
\usepackage{amssymb}
\usepackage{mathtools}
\usepackage{amsthm}

\usepackage{bbm}

\usepackage{listings}
\usepackage{xcolor}

\lstdefinestyle{praxiscode}{
  basicstyle=\ttfamily\footnotesize,
  breaklines=true,
  breakatwhitespace=false,
  columns=fullflexible,
  keepspaces=true,
  showstringspaces=false,
  frame=single,
  framerule=0.3pt,
  rulecolor=\color{black!25},
  xleftmargin=0.0em,
  xrightmargin=0.0em,
  aboveskip=0.6em,
  belowskip=0.6em
}

\usepackage{algorithm}
\usepackage{algorithmic} 

\usepackage{siunitx}
\usepackage[capitalize,noabbrev]{cleveref}

\theoremstyle{plain}
\newtheorem{theorem}{Theorem}[section]

\newtheorem{lemma}[theorem]{Lemma}
\newtheorem{corollary}[theorem]{Corollary}
\theoremstyle{definition}
\newtheorem{definition}[theorem]{Definition}

\theoremstyle{remark}
\newtheorem{remark}[theorem]{Remark}

\usepackage{lineno}
\nolinenumbers

\definecolor{darkblue}{rgb}{0, 0, 0.5}
\hypersetup{colorlinks=true, citecolor=darkblue, linkcolor=darkblue, urlcolor=darkblue}

\title{PRAXIS: Learning Dynamics of Self-Improving Models with Symbolic Archives}

\author{Antiquus S.~Hippocampus, Natalia Cerebro \& Amelie P. Amygdale \thanks{ Use footnote for providing further information
about author (webpage, alternative address)---\emph{not} for acknowledging
funding agencies.  Funding acknowledgements go at the end of the paper.} \\
Department of Computer Science\\
Cranberry-Lemon University\\
Pittsburgh, PA 15213, USA \\
\texttt{\{hippo,brain,jen\}@cs.cranberry-lemon.edu} \\
\And
Ji Q. Ren \& Yevgeny LeNet \\
Department of Computational Neuroscience \\
University of the Witwatersrand \\
Joburg, South Africa \\
\texttt{\{robot,net\}@wits.ac.za} \\
\AND
Coauthor \\
Affiliation \\
Address \\
\texttt{email}
}

\begin{document}

\maketitle

\begin{abstract}
Self-improving learning systems adapt data selection, optimization, and auxiliary symbolic components, inducing nonstationary objectives outside standard learning assumptions. We introduce \textsc{PRAXIS}, a co-evolutionary framework that models generators, learners, and symbolic archives as interacting dynamical processes. We prove that KL-constrained generator updates and controlled archive-weight movement bound one-step objective drift, that archive updates suppress a program relative to any fixed comparator with a persistent cumulative utility advantage under sub-Gaussian noise, and that stochastic gradient descent achieves an average-stationarity guarantee whose degradation is governed by cumulative objective drift. Experiments across visual robustness, relational graph reasoning, and algorithmic graph reasoning exhibit generator stabilization, decreasing learner loss, and archive concentration consistent with these theoretical mechanisms.
\end{abstract}
\section{Introduction}
Modern learning systems increasingly adapt not only model parameters, but also the data, tasks, and auxiliary inductive signals that shape training. This pattern appears in self-training~\citep{wu2025selfai, behrouz2025nested}, curriculum learning~\citep{bengio2009curriculum, shi2025efficient, wang2023efficienttrain}, dataset distillation~\citep{li2024self, howard2024neurocomparatives, li2021self}, active learning~\citep{tamkin2022active, parvaneh2022active}, and self-evolving language models~\citep{huang2025r, yu2025guided, zhang2025path}. Yet these mechanisms are often specified externally or updated independently of the learner's internal dynamics: data distributions, supervision signals, task families, reward models, and verifiers are typically fixed or treated as exogenous. This limits our ability to reason about objective drift, stability, and adaptation of inductive structure, especially in settings such as visual perception and symbolic reasoning where reliable semantic oracles may be unavailable. We introduce \textsc{PRAXIS}, a learning-theoretic framework for self-improving learning as a closed-loop dynamical process coupling adaptive universe selection, gradient-based optimization, and symbolic archive regularization. Unlike co-adaptive pipelines that demonstrate empirical gains without a unified dynamics model~\citep{solko2024data,naik2024dolphin}, PRAXIS explicitly represents the interaction between the Generator, Learner, and symbolic Archive. This enables formal analysis of KL-controlled objective drift, learner tracking under nonstationarity, and selective suppression of non-competitive symbolic programs.

Our key contributions are:
i). a formal Generator--Learner--Archive model for self-improving learning under endogenous nonstationarity;
ii). finite-horizon guarantees for KL- and archive-controlled objective drift, average learner stationarity, and fixed-comparator archive suppression;
iii). empirical evidence across visual robustness, relational graph reasoning, and algorithmic graph reasoning showing the predicted stability and suppression dynamics.
\section{Preliminaries}
Let $\mathbb{R}$ denote the real numbers, $\Delta(\mathcal{I})$ the probability
simplex over a finite index set $\mathcal{I}$, and $\mathbb{E}[\cdot]$ expectation
with respect to the distribution specified by context.
Let $\mathcal{U}$ denote a universe space indexing a family of supervised
data-generating distributions.
Each universe $u \in \mathcal{U}$ induces a distribution
$D_u$ over labeled examples $(x,y)\in\mathcal{X}\times\mathcal{Y}$, differing
in properties such as data volume, bias, augmentation strength, or noise.
The simplex $\Delta(\mathcal{U})$ denotes distributions over universes, and
$\pi_0\in\Delta(\mathcal{U})$ denotes a fixed reference prior. 
For $q,\pi\in\Delta(\mathcal{U})$, the Kullback--Leibler (KL) divergence is defined as
\(
\mathrm{KL}(q\Vert\pi)
=
\int_{\mathcal{U}} \log\!\left(\frac{dq}{d\pi}\right)\,dq,
\)
whenever $q$ is absolutely continuous with respect to $\pi$, and infinite
otherwise.
A \emph{Learner} is a parametric predictor
\(
f_\theta:\mathcal{X}\rightarrow\Delta(\mathcal{Y}),\qquad \theta\in\Theta,
\)
mapping inputs to predictive label distributions.
The predictive output $z_\theta(x):=f_\theta(x)$ constitutes the sole interface
exposed to symbolic components; all symbolic interaction operates directly on
the Learner's predictive distribution.
A \emph{Generator} is a stochastic policy
\(
q_\phi\in\Delta(\mathcal{U}),\qquad \phi\in\Phi,
\)
which specifies a sampling distribution over universes and thereby determines
the mixture distribution from which training data are drawn.
The framework maintains a finite \emph{archive} of symbolic programs
$\mathcal{P}=\{p_1,\ldots,p_m\}$.
Each program
$p_j:\Delta(\mathcal{Y})\rightarrow\mathcal{Y}$
is a deterministic rule mapping the Learnerâ€™s predictive distribution to a hard
label.
Programs encode heuristic or structural biases and are not assumed to be
semantically correct.
Each program $p_j$ is associated with a score $s_j\in\mathbb{R}$, inducing a
Gibbs distribution over the archive:
\(
w_j=\frac{\exp(\beta s_j)}{\sum_{k=1}^m \exp(\beta s_k)},\qquad \beta>0,
\)
where $w\in\Delta(\mathcal{P})$ represents a soft inductive bias over symbolic
hypotheses.

\section{Problem Formulation}
\label{sec:problem_formulation}

Learning is defined over a universe space $\mathcal{U}$ indexing a family of
supervised data-generating distributions $\{D_u\}_{u\in\mathcal{U}}$.
For a Learner $f_\theta:\mathcal{X}\rightarrow\Delta(\mathcal{Y})$, the population
loss on universe $u$ is
\(
L(\theta,u)
=
\mathbb{E}_{(x,y)\sim D_u}
\big[
\ell\big(f_\theta(x),y\big)
\big],
\label{eq:base-loss}
\)
where $\ell$ is a bounded supervised loss.
Given a universe distribution $q\in\Delta(\mathcal{U})$, the induced mixture risk
is
\(
\mathcal{R}(\theta,q)
=
\mathbb{E}_{u\sim q} L(\theta,u).
\label{eq:universe-risk}
\)
The Generator adapts the universe distribution to emphasize learning-relevant
regions of $\mathcal{U}$.
Deviations from a reference prior $\pi_0\in\Delta(\mathcal{U})$ are regularized
via a KL penalty, yielding the variational objective
\begin{equation}
\min_{\theta\in\Theta}
\max_{q\in\Delta(\mathcal{U})}
\left\{
\mathbb{E}_{u\sim q} L(\theta,u)
-
\frac{1}{\eta}\,
\mathrm{KL}(q\Vert\pi_0)
\right\},
\label{eq:variational-minmax}
\end{equation}
where $\eta>0$ controls the adaptation rate.
In practice, the maximization over $q$ is implemented through KL--proximal
updates, ensuring smooth generator dynamics and bounded objective drift.
The symbolic archive $\mathcal{P}=\{p_j\}_{j=1}^m$ introduces auxiliary inductive
structure through a nonnegative consistency functional.
Each program produces a discrete symbolic prediction
\(
p_j(f_\theta(x))\in\mathcal Y
\)
from the learner's predictive distribution.  This hard prediction is not
differentiated through.  Let
\(
\tilde y_{j,\theta}(x)=\operatorname{sg}\!\left[p_j(f_\theta(x))\right]
\)
denote the resulting stop-gradient pseudo-label, where $\operatorname{sg}[\cdot]$
means that the argument is treated as a constant during backpropagation.  The
program then induces the clipped cross-entropy discrepancy
\begin{equation}
D_j(\theta;x)
=
\frac{1}{C_\Psi}
\min\!\left\{
-\log f_\theta(x)_{\tilde y_{j,\theta}(x)},
C_\Psi
\right\},
\qquad
0\le D_j(\theta;x)\le 1,
\label{eq:archive_discrepancy}
\end{equation}
where $C_\Psi>0$ is a clipping constant.  Gradients of $D_j$ flow only through
the learner distribution $f_\theta(x)$; the discrete program output
$\tilde y_{j,\theta}(x)$ is treated as fixed for the purpose of the learner
update.
For a labeled example $(x,y)$, the archive-induced penalty is
\begin{equation}
\Psi(f_\theta,\mathcal{P};x,y)
=
\sum_{j=1}^m
w_j\,
g_j(f_\theta;x,y)\,
D_j(\theta;x),
\label{eq:psi-def}
\end{equation}
where $w_j$ is the current archive weight and
$g_j(f_\theta;x,y)\in[0,1]$ is an optional intervention gate that modulates
whether program $j$ contributes on the example.  The gate is not differentiated;
it is used only to control the strength of the learner-facing consistency
regularizer.
This yields the augmented per-universe loss
\begin{equation}
L_{\mathrm{aug}}(\theta,u)
=
\mathbb{E}_{(x,y)\sim D_u}
\left[
\ell\big(f_\theta(x),y\big)
+
\lambda\,\Psi(f_\theta,\mathcal{P};x,y)
\right],
\label{eq:aug-loss}
\end{equation}
with $\lambda>0$ controlling the strength of symbolic regularization.
Combining adaptive universe selection with archive-augmented learning yields the
regularized min--max objective
\begin{equation}
\min_{\theta\in\Theta}
\max_{q\in\Delta(\mathcal{U})}
\left\{
\mathbb{E}_{u\sim q}
L_{\mathrm{aug}}(\theta,u)
-
\frac{1}{\eta}\,
\mathrm{KL}(q\Vert\pi_0)
\right\}.
\label{eq:full-problem}
\end{equation}
This objective is not solved to equilibrium; instead, it induces a coupled
closed-loop dynamics in which the Learner and Generator evolve jointly.
The Archive is not optimized directly, and influences learning only through the
time-varying weights $w_j$ appearing in $\Psi$.
\section{Solution Approach}
\label{sec:approach}

PRAXIS realizes the variational formulation of
Section~\ref{sec:problem_formulation} as a discrete-time stochastic process.
At iteration $t=0,1,2,\ldots$, the system maintains a coupled state
\(
(\theta_t,\, s_t,\, \phi_t),
\)
where $\theta_t\in\Theta$ are Learner parameters, $s_t=(s_{1,t},\ldots,s_{m,t})$
are Archive scores, and $\phi_t\in\Phi$ parameterizes a Generator distribution
$q_t:=\pi_{\phi_t}\in\Delta(\mathcal U)$.
The induced Archive weights are
$w_{j,t}=\exp(\beta s_{j,t})/\sum_k \exp(\beta s_{k,t})$.
Updates are performed online and are not solved to equilibrium.
Instead, each component is updated using stochastic first-order information
computed under the current state of the others.
All coupling between components occurs through the augmented loss
$L_{\mathrm{aug}}$ and the current weights $(q_t,w_t)$, yielding a closed-loop
learning process in which data selection, inductive structure, and model
parameters evolve jointly.
\subsection{Sampling and Estimation}
At iteration $t$, the Generator distribution $q_t$ is used to sample
$K$ universes
\(
u_t^{(1)},\ldots,u_t^{(K)} \sim q_t
\)
independently.
From each sampled universe $u_t^{(k)}$, an empirical dataset
$\mathcal D_t^{(k)}$ is drawn from $D_{u_t^{(k)}}$.
These samples define a Monte Carlo approximation to expectations with respect
to the mixture distribution induced by $q_t$.
Let
\(
\mathcal D_t := \bigcup_{k=1}^K \mathcal D_t^{(k)}
\)
denote the aggregated dataset at iteration $t$.
This sampling mechanism provides a unified source of signal for the Learner,
Archive, and Generator updates, ensuring that all components respond to the
same underlying universe mixture at each iteration.

\subsection{Learner Update}

At each iteration $t$, the Learner produces predictive distributions
$z_{\theta_t}(x)=f_{\theta_t}(x)\in\Delta(\mathcal Y)$.
Symbolic programs in the archive operate exclusively on these predictive distributions, and 
no additional latent representations are assumed.
This design enforces a strict interface between neural and symbolic components.
For any universe $u\in\mathcal U$, the augmented population loss is
\begin{equation}
L_{\mathrm{aug}}(\theta_t,u)
=
\mathbb E_{(x,y)\sim D_u}
\Big[
\ell(f_{\theta_t}(x),y)
+
\lambda\,\Psi(f_{\theta_t},\mathcal P_t;x,y)
\Big],
\label{eq:learner_aug_loss}
\end{equation}
where $\ell$ is a bounded supervised loss and
$\Psi$ is the Archive-induced consistency functional.
By construction of the consistency functional (Section~\ref{sec:problem_formulation}),
$0\le \Psi\le 1$, and therefore
\(
0\le L_{\mathrm{aug}}(\theta,u)\le L_{\max}+\lambda=:M
\qquad
\forall\,\theta,u.
\)
Given $q_t$, the induced mixture objective is
\(
F_t(\theta)
=
\mathbb E_{u\sim q_t} L_{\mathrm{aug}}(\theta,u).
\)
The Learner update is implemented via a stochastic descent step
\(
\theta_{t+1}
=
\theta_t
-
\alpha_t\, g_t,
\label{eq:learner_update}
\)
where $g_t$ is an unbiased estimator of $\nabla F_t(\theta_t)$ computed from
minibatches drawn from $\mathcal D_t$.
Gradients propagate only through $f_{\theta_t}(x)$.  Symbolic programs are
treated as fixed, non-differentiable mappings: each hard program output is
converted into a stop-gradient pseudo-label, and the learner receives gradient
only from the clipped discrepancy in Eq.~\eqref{eq:archive_discrepancy}.
Thus $\Psi$ is a learner-facing regularizer, with each program's contribution
scaled by its weight $w_{j,t}$ and optional gate $g_j$.

\subsection{Archive Update}

At each iteration $t$, the framework maintains a finite archive of symbolic
programs $\mathcal P=\{p_1,\ldots,p_m\}$.
Each program
$p_j:\Delta(\mathcal Y)\rightarrow\mathcal Y$
is a deterministic mapping from the Learner's predictive distribution to a hard
label.
Programs encode heuristic or structural inductive biases and are not assumed to
be semantically correct.
The archive may be augmented at discrete expansion events; between such events,
$\mathcal P$ is treated as fixed, and learning dynamics act only on the associated
scores and weights.
Let
\(
\hat y_t(x)=\arg\max_{y'\in\mathcal Y} f_{\theta_t}(x)_{y'}
\)
denote the learner's hard prediction.
Each program $p_j$ is assigned a scalar utility
\(
U_{j,t}
=
F_{j,t}
-
\gamma B_{j,t},
\)
where $\gamma>0$ controls the penalty assigned to counterproductive symbolic
influence.
The fitness term
\(
F_{j,t}
=
\frac{1}{|\mathcal D_t|}
\sum_{(x,y)\in\mathcal D_t}
\mathbf 1\!\left[
p_j\!\left(f_{\theta_t}(x)\right)=y
\right]
\)
measures empirical agreement with ground-truth labels under the current universe
mixture, while the blame term
\(
B_{j,t}
=
\frac{1}{|\mathcal D_t|}
\sum_{(x,y)\in\mathcal D_t}
\mathbf 1\!\left[
\hat y_t(x)=y
\;\wedge\;
p_j\!\left(f_{\theta_t}(x)\right)\neq y
\right]
\)
penalizes symbolic disagreement only on samples already classified correctly by
the Learner, thereby gating blame by learner reliability.
The fitness $F_{j,t}$ and blame $B_{j,t}$ are archive-level statistics.  They
are not differentiated through and are not the same object as the per-example
discrepancy $D_j$ used in $\Psi$.  The discrepancy $D_j$ affects the learner
update through the differentiable regularizer in Eq.~\eqref{eq:psi-def}, while
$F_{j,t}$ and $B_{j,t}$ affect future archive influence only through the score
and weight updates below.
The empirical utility used in the update is
$\widehat U_{j,t}=U_{j,t}+\xi_{j,t}$, where $\xi_{j,t}$ captures stochastic
estimation noise due to finite sampling.
Archive scores evolve according to
\(
s_{j,t+1}
=
s_{j,t}
+
\eta_p\,\widehat U_{j,t},
\)
where $\eta_p>0$ controls the rate at which empirical utility is accumulated over
time.
Scores are mapped to archive weights via a Gibbs distribution
\(
w_{j,t}
=
\frac{\exp(\beta s_{j,t})}{\sum_{k=1}^m \exp(\beta s_{k,t})},
\qquad \beta>0.
\)
This update induces a competitive selection process in which symbolic programs
accumulate influence in proportion to sustained empirical utility, while programs
that are persistently uninformative or contradictory are progressively
downweighted.

\subsection{Generator Update}

The Generator maintains a parametric distribution
$q_t=\pi_{\phi_t}\in\Delta(\mathcal U)$ over the universe space and adapts it in
response to the evolving Learner--Archive state.
At iteration $t$, universes
$u_t^{(1)},\ldots,u_t^{(K)}\sim q_t$
are sampled and used to update the Learner.
After the Learner update $\theta_{t+1}$, each sampled universe is assigned a
scalar reward
\(
R_t^{(k)}
=
\widehat L_{\mathrm{aug}}(\theta_{t+1},u_t^{(k)})
+
\epsilon_t^{(k)},
\)
where $\widehat L_{\mathrm{aug}}$ is the empirical augmented loss evaluated
post-update and $\epsilon_t^{(k)}$ is an exploration noise term.
Post-update evaluation biases the Generator toward universes that remain
informative under adaptation.

The Generator update minimizes the KL-regularized objective
\[
\mathcal J_t(\phi)
=
-
\frac{1}{K}
\sum_{k=1}^K
\big(R_t^{(k)}-b_t\big)
\log \pi_\phi(u_t^{(k)})
\nonumber\\
+
\frac{1}{\eta_1}\,
\mathrm{KL}(\pi_\phi\Vert\pi_{\phi_t})
+
\frac{1}{\eta_2}\,
\mathrm{KL}(\pi_\phi\Vert\pi_0),
\label{eq:generator_objective}
\]
where $b_t$ is a variance-reduction baseline and $\pi_0$ is a fixed reference
prior.
The first KL term enforces a trust-region constraint limiting step-to-step
distributional movement, while the second prevents collapse onto degenerate
subsets of $\mathcal U$.

Updates are accepted only if the resulting distribution
$q_{t+1}=\pi_{\phi_{t+1}}$ satisfies
\(
\mathrm{KL}(q_{t+1}\Vert q_t)\le \kappa_t,
\label{eq:generator_trust_region}
\)
for a prescribed radius $\kappa_t>0$.
Generator parameters are updated via
\(
\phi_{t+1}
=
\phi_t
-
\eta_\phi\nabla_\phi \mathcal J_t(\phi_t),
\)
which corresponds to a natural-gradient step on $\Delta(\mathcal U)$ under a
softmax parameterization. In expectation, these dynamics approximate ascent on the KL-regularized inner
maximization of Eq.~\eqref{eq:full-problem}.
Collectively, these updates implement a stochastic primal--dual tracking
procedure for the regularized min--max objective in
Eq.~\eqref{eq:full-problem}, with the Learner performing descent in $\theta$ and
the Generator performing regularized ascent in $q$.

\section{Theoretical Analysis of PRAXIS Dynamics}
\label{sec:theoretical_analysis}
This section analyzes the stability properties of the learning dynamics induced
by PRAXIS.
Rather than establishing convergence to a stationary optimum, the analysis
focuses on structural guarantees that align directly with the implemented
update rules.
Specifically, KL-regularized Generator updates are shown to control
nonstationarity of the Learner's population objective (Subsection \ref{subsec:generator-induced drift}), Archive weights are shown
to suppress persistently uninformative symbolic programs under stochastic
utility estimation (Subsection \ref{subsec:archive_suppression_under_utility_advantage}), and the Learner is shown to track the resulting
time-varying objective under standard stochastic gradient assumptions (Subsection \ref{sec:learner_tracking}).
While interpretations of the theory are provided here, the proofs are provided in the Appendix.

\subsection{Generator-induced objective drift}
\label{subsec:generator-induced drift}
\begin{definition}[Time--Varying Mixture Objective]
\label{def:time-mixture-objective}
At iteration $t$, let $q_t := \pi_{\phi_t}\in\Delta(\mathcal{U})$ denote the
Generator distribution and let $L_{\mathrm{aug},t}$ denote the augmented loss
formed using the current archive state $(\mathcal P_t,w_t)$.
The associated time--varying mixture objective is
\(
F_t(\theta)
\;:=\;
\mathbb{E}_{u\sim q_t}\,L_{\mathrm{aug},t}(\theta,u),
\qquad \theta\in\Theta,
\label{eq:def_Ft}
\)
where $L_{\mathrm{aug}}(\theta,u)$ is the augmented per--universe loss defined in
Section~\ref{sec:problem_formulation}.
\end{definition}

\begin{definition}[Uniform Objective Drift]
\label{def:uniform_drift}
The one--step pathwise uniform objective drift at iteration $t$ is defined as
\(
\delta_t
\;:=\;
\sup_{\theta\in\Theta}
\big|F_{t+1}(\theta)-F_t(\theta)\big|.
\label{eq:def_delta}
\)
\end{definition}

\begin{remark}
The PRAXIS Generator update enforces a KL trust region, ensuring that consecutive
universe distributions do not change arbitrarily between iterations.
This constraint is the primary mechanism used to control objective drift.    
\end{remark}

\begin{theorem}[Generator- and archive-controlled objective drift]
\label{thm:kl_drift}
Let $\{q_t\}_{t\ge 0}\subset\Delta(\mathcal U)$ be the Generator sequence and
$F_t(\theta)=\mathbb{E}_{u\sim q_t}L_{\mathrm{aug},t}(\theta,u)$ the induced mixture
objective. Define the archive-induced loss drift
\(
a_t:=\sup_{\theta\in\Theta,u\in\mathcal U}
|L_{\mathrm{aug},t+1}(\theta,u)-L_{\mathrm{aug},t}(\theta,u)|.
\)
If the augmented losses satisfy the uniform bound
\(
\sup_{\theta\in\Theta,u\in\mathcal U}
|L_{\mathrm{aug},t}(\theta,u)|
\;\le\;
M,
\)
then, for every $t\ge 0$,
\(
\delta_t
\;\le\;
M\|q_{t+1}-q_t\|_1+a_t
\;\le\;
M\sqrt{2\,\mathrm{KL}(q_{t+1}\Vert q_t)}+a_t,
\)
whenever $\mathrm{KL}(q_{t+1}\Vert q_t)<\infty$.
If programs and intervention gates are fixed across the step and only the archive
weights change, then $a_t\le \lambda\|w_{t+1}-w_t\|_1$. Moreover, if
$\mathrm{KL}(q_{t+1}\Vert q_t)\to0$ and $a_t\to0$, then $\delta_t\to0$.
\end{theorem}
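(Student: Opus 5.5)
The plan is to split the one-step difference into a generator part and an archive part by adding and subtracting the mixed term $\mathbb{E}_{u\sim q_{t+1}}L_{\mathrm{aug},t}(\theta,u)$. For fixed $\theta$,
\[
F_{t+1}(\theta)-F_t(\theta)
=\mathbb{E}_{u\sim q_{t+1}}\!\big[L_{\mathrm{aug},t+1}(\theta,u)-L_{\mathrm{aug},t}(\theta,u)\big]
+\Big(\mathbb{E}_{u\sim q_{t+1}}-\mathbb{E}_{u\sim q_t}\Big)L_{\mathrm{aug},t}(\theta,u).
\]
The first term is at most $a_t$ in absolute value, because $q_{t+1}$ is a probability measure and $a_t$ is a uniform bound over $(\theta,u)$.

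For the second term I use the standard bound $|\int h\,d(q_{t+1}-q_t)|\le \sup|h|\,\|q_{t+1}-q_t\|_1$, where $\|\cdot\|_1$ is the total variation norm of the signed measure (the integral of $|d(q_{t+1}-q_t)|$). Here $h=L_{\mathrm{aug},t}(\theta,\cdot)$ satisfies $\sup|h|\le M$. This gives $M\|q_{t+1}-q_t\|_1$.

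One could sharpen this to $\tfrac{M}{2}\|\cdot\|_1$ using $0\le L_{\mathrm{aug}}\le M$ and the fact that $q_{t+1}-q_t$ has zero total mass, but the stated constant does not need it. Both bounds are uniform in $\theta$, so taking $\sup_\theta$ yields $\delta_t\le M\|q_{t+1}-q_t\|_1+a_t$. I take $L_{\mathrm{aug},t}(\theta,\cdot)$ to be measurable on $\mathcal U$, which is implicit in the definition of $F_t$. The second inequality is Pinsker's inequality in the form $\|q_{t+1}-q_t\|_1\le\sqrt{2\,\mathrm{KL}(q_{t+1}\Vert q_t)}$, which is valid whenever the KL divergence is finite.

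For the archive refinement, fix the programs, the gates and hence the pseudo-labels. Then $L_{\mathrm{aug},t+1}-L_{\mathrm{aug},t}=\lambda\,\mathbb{E}_{(x,y)\sim D_u}\sum_j (w_{j,t+1}-w_{j,t})\,g_j D_j$. Since $0\le g_j D_j\le 1$ by Eq.~\eqref{eq:archive_discrepancy} and $g_j\in[0,1]$, the absolute value is at most $\lambda\sum_j|w_{j,t+1}-w_{j,t}|$. This bound is uniform in $(\theta,u)$, so $a_t\le\lambda\|w_{t+1}-w_t\|_1$. The point to check here is that $D_j$ depends on $\theta$ only through $f_\theta$ and the program output. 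The program output is unchanged when programs are fixed, so the supervised part $\ell$ cancels exactly and only the weight change survives.

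The limit statement then follows by continuity of $x\mapsto M\sqrt{2x}$ at $0$ together with $a_t\to0$. I do not expect any step to be a real obstacle. The only delicate points are measure-theoretic bookkeeping: the norm convention for $\|\cdot\|_1$ on a general $\mathcal U$, so that it matches Pinsker's constant, and the implicit measurability of $L_{\mathrm{aug},t}(\theta,\cdot)$.
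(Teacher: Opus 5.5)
Your proof is correct and matches the paper's: add and subtract a mixed term, bound the measure difference by $M\|q_{t+1}-q_t\|_1$ plus Pinsker, and bound the weight-only change by $\lambda\|w_{t+1}-w_t\|_1$ using the triangle inequality on a weighted sum of $[0,1]$ terms. The only cosmetic difference is your cross term, $\mathbb{E}_{u\sim q_{t+1}}L_{\mathrm{aug},t}$ instead of the paper's $\mathbb{E}_{u\sim q_t}L_{\mathrm{aug},t+1}$, so you need the bound $M$ only for $L_{\mathrm{aug},t}$, which matches the hypothesis exactly as stated at index $t$.
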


\paragraph{Interpretation.}
Theorem~\ref{thm:kl_drift} separates the two sources of objective movement:
Generator movement in KL geometry and change in the archive-induced loss. Thus
Generator stabilization alone controls the full objective only when the archive
is frozen or its induced loss drift is controlled separately.

While $\delta_t$ bounds worst-case objective drift, learning proceeds along a
specific stochastic trajectory.
The next result relates this uniform notion to the drift actually experienced
along the Learner iterates.

\begin{definition}[Along-iterate objective drift (magnitude)]
\label{def:Delta_t}
Let $\{\theta_t\}_{t\ge 0}$ denote the (possibly stochastic) sequence of Learner
iterates and $\{F_t\}_{t\ge 0}$ the associated mixture objectives.
The one--step objective drift along the iterates is defined as
\(
\Delta_t
\;:=\;
\mathbb{E}\!\left[
\big|F_{t+1}(\theta_{t+1}) - F_t(\theta_{t+1})\big|
\right],
\label{eq:def_Delta}
\)
where the expectation is taken over all algorithmic randomness up to iteration
$t+1$.
\end{definition}

\begin{lemma}[Along-iterate drift is dominated by uniform drift]
\label{lem:Delta_le_delta}
For all $t\ge 0$,
\(
\Delta_t
\;\le\;
\mathbb E[\delta_t].
\)
\end{lemma}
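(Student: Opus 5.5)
The plan is a pathwise domination followed by monotonicity of expectation. Fix an outcome $\omega$ of all algorithmic randomness up to iteration $t+1$: the sampled universes, minibatches, utility noise $\xi_{j,t}$, and exploration noise $\epsilon_t^{(k)}$. On this outcome the objects $q_t,q_{t+1}$, the archive states $(\mathcal P_t,w_t)$ and $(\mathcal P_{t+1},w_{t+1})$, and the iterate $\theta_{t+1}$ are all deterministic. Consequently $F_t$ and $F_{t+1}$ are fixed functions on $\Theta$, and $\theta_{t+1}(\omega)\in\Theta$ is a single point. By Definition~\ref{def:uniform_drift}, the supremum over $\Theta$ dominates the value at any particular point, so
\[
\big|F_{t+1}(\theta_{t+1})-F_t(\theta_{t+1})\big|\le \sup_{\theta\in\Theta}\big|F_{t+1}(\theta)-F_t(\theta)\big|=\delta_t
\]
holds pointwise in $\omega$.

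Next I take expectations of both sides. The left-hand side becomes $\Delta_t$ by Definition~\ref{def:Delta_t}, and monotonicity of expectation gives $\Delta_t\le\mathbb E[\delta_t]$.

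The only real obstacle is checking that these expectations are well defined, which means confirming measurability of $\delta_t$ as a supremum over a possibly uncountable $\Theta$. Integrability is not an issue: both sides lie in $[0,2M]$ by the uniform bound $0\le L_{\mathrm{aug}}\le M$, which in turn follows from $0\le\Psi\le1$. For measurability, I would first use the random variable $\big|F_{t+1}(\theta_{t+1})-F_t(\theta_{t+1})\big|$ on the left. To handle $\delta_t$ I would either assume $\Theta$ is separable and $\theta\mapsto F_t(\theta)$ is continuous, so the supremum reduces to a countable dense subset, or interpret $\mathbb E[\delta_t]$ as an outer expectation. In either case the inequality survives, because the outer expectation still dominates the expectation of any measurable minorant. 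I will state this convention briefly in the proof.

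Finally, I note that this lemma does not use Theorem~\ref{thm:kl_drift}. Combining the two, however, gives $\Delta_t\le M\,\mathbb E\sqrt{2\mathrm{KL}(q_{t+1}\Vert q_t)}+\mathbb E[a_t]$, which is the form needed for the later learner tracking analysis.
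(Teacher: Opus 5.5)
Your proposal is correct and matches the paper's proof: the paper also bounds $|F_{t+1}(\theta_{t+1})-F_t(\theta_{t+1})|$ by $\delta_t$ pathwise, since $\theta_{t+1}\in\Theta$, and then takes expectations. Your remarks on measurability and the outer expectation add rigor the paper leaves implicit; they do not change the argument.
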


Lemma~\ref{lem:Delta_le_delta} states that the expected objective change
experienced by the Learner along its realized trajectory is bounded by the
worst--case (uniform) objective drift.
This permits uniform drift bounds to be used as conservative surrogates in
tracking analyses.
Under stronger assumptions on Generator stabilization, the objective drift decays at
a quantifiable rate as shown below.

\begin{definition}[Exponential Generator stabilization]
\label{def:gen_stabilization}
The Generator is said to stabilize exponentially after $T_0$ if there exist
constants $c>0$, $\kappa>0$, and $T_0\in\mathbb{N}$ such that
\(
\|q_{t+1}-q_t\|_1
\;\le\;
c\,e^{-\kappa(t-T_0)},
\qquad \forall\, t\ge T_0.
\label{eq:def_gen_stab}
\)
\end{definition}

\begin{corollary}[Exponential component stabilization implies objective stabilization]
\label{cor:generator_drift}
Under the assumptions of Theorem~\ref{thm:kl_drift}, suppose the Generator
stabilizes exponentially and $a_t\le d e^{-\rho(t-T_0)}$ for constants
$d,\rho>0$. Then for all $t\ge T_0$,
\(
\delta_t
\;\le\;
Mc\,e^{-\kappa(t-T_0)}+d e^{-\rho(t-T_0)},
\)
and the along-iterate drift satisfies
\(
\Delta_t
\;\le\;
\mathbb E\!\left[Mc\,e^{-\kappa(t-T_0)}+d e^{-\rho(t-T_0)}\right].
\)
\end{corollary}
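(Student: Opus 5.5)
The corollary follows by chaining three earlier ingredients, so the plan is to compose them directly rather than re-derive anything.

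First, for a fixed $t\ge T_0$, I invoke the first inequality of Theorem~\ref{thm:kl_drift}, namely $\delta_t\le M\|q_{t+1}-q_t\|_1+a_t$. This is the total-variation form, not the KL form. That choice matters: Definition~\ref{def:gen_stabilization} is stated in $\|\cdot\|_1$, so using the $\ell_1$ bound avoids passing through Pinsker's inequality. It also means the finiteness condition $\mathrm{KL}(q_{t+1}\Vert q_t)<\infty$ is not needed for this step.

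Next, I substitute the two hypotheses pathwise. These are $\|q_{t+1}-q_t\|_1\le c\,e^{-\kappa(t-T_0)}$ and $a_t\le d\,e^{-\rho(t-T_0)}$. Since $M\ge 0$, multiplying the first by $M$ preserves the inequality, and adding the two gives
\[
\delta_t\le Mc\,e^{-\kappa(t-T_0)}+d\,e^{-\rho(t-T_0)}
\]
on every realization of the algorithmic randomness. If $q_t$ and $a_t$ are random, I read the stabilization hypotheses as holding almost surely along the trajectory; under that reading the bound holds almost surely.

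Finally, for the along-iterate statement, I apply Lemma~\ref{lem:Delta_le_delta} to get $\Delta_t\le\mathbb E[\delta_t]$. Monotonicity of expectation applied to the pathwise bound then yields the claimed inequality. The right-hand side is measurable and bounded, so the expectation exists. If the constants $c,d,\kappa,\rho$ are deterministic, the expectation is vacuous and simply returns the same expression, which is presumably why the statement keeps $\mathbb E[\cdot]$ to allow trajectory-dependent constants.

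The only point requiring care is this interpretation of the hypotheses as almost-sure, or deterministic, bounds, so that taking expectations is legitimate. Beyond that there is no real obstacle.
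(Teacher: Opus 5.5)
Your proposal is correct and matches the paper's proof. Like the paper, you take the $\ell_1$ bound from Theorem~\ref{thm:kl_drift}, substitute the two exponential hypotheses to bound $\delta_t$, then take expectations and apply Lemma~\ref{lem:Delta_le_delta} for $\Delta_t$; your remarks on bypassing Pinsker and reading the hypotheses as almost-sure bounds make explicit what the paper leaves implicit.
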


\textbf{Remark (stabilization versus concentration).}
In finite universe spaces, measures such as entropy or the maximum mass
$\max_u q_t(u)$ are often used to diagnose Generator concentration.
Such concentration is compatible with stabilization measured in the
$\ell_1$ distance $\|q_{t+1}-q_t\|_1$ (equivalently, total variation distance) or
in KL divergence: the distribution may become increasingly peaked while the
one-step change between successive Generator iterates decays.
For this reason, stabilization in KL geometry (or $\ell_1$ distance), rather than
concentration alone, is the appropriate condition governing the drift of the
time-varying mixture objective.

\subsection{Archive suppression under persistent utility advantage}
\label{subsec:archive_suppression_under_utility_advantage}

\begin{definition}[Population and empirical archive utilities]
\label{def:archive_utilities}
For each program $p_j$ at iteration $t$, its population utility is
\(
U_{j,t} := F_{j,t} - \gamma B_{j,t},
\)
where $F_{j,t}$ and $B_{j,t}$ denote symbolic fitness and blame, respectively, as
defined in Section~\ref{sec:approach}.
These are non-differentiated archive update statistics computed after evaluating
programs on a batch or probe set.  They should be distinguished from the
learner-facing discrepancy $D_j$ in Eq.~\eqref{eq:archive_discrepancy}, which
enters $\Psi$ and is differentiated only through the learner distribution.
The empirical utility used to update the archive is
\(
\widehat U_{j,t} = U_{j,t} + \xi_{j,t},
\)
where $\xi_{j,t}$ represents stochastic estimation noise arising from finite
sampling over universes and data.
\end{definition}

\begin{definition}[Sub-Gaussian \emph{pairwise} utility estimation noise]
\label{def:archive_noise_subg_pairwise}
Let $\{\mathcal F_t\}_{t\ge 0}$ denote the natural filtration generated by the
PRAXIS dynamics. The noise process $\{\xi_{j,t}\}$ satisfies, for constants
$\mu\ge 0$ and $\sigma>0$:
(i) Bounded conditional bias:
For all $j,t$,
\(
\big|\mathbb E[\xi_{j,t}\mid \mathcal F_{t-1}]\big|\le \mu
\qquad \text{a.s.}
\)
(ii) Pairwise sub-Gaussian increments:
For all ordered pairs $(i,j)\in[m]\times[m]$, all $t$, and all $\lambda\in\mathbb R$,
\(
\mathbb E\!\left[
\exp\!\Big(
\lambda\Big(
(\xi_{j,t}-\xi_{i,t})-\mathbb E[\xi_{j,t}-\xi_{i,t}\mid \mathcal F_{t-1}]
\Big)
\Big)
\;\middle|\; \mathcal F_{t-1}
\right]
\le
\exp\!\left(\tfrac{\lambda^2(2\sigma)^2}{2}\right)
\qquad \text{a.s.}
\)
\end{definition}
\begin{lemma}[Learner-coupled lower bound on utility gaps]
\label{lem:endogenous_margin}
Assume $0\le F_{k,t}\le 1$ for all $k,t$, and that blame is learner-gated in the
sense that blame is incurred only on samples correctly classified by the learner.
Let
\(
e_t := \mathbb P(\hat y_t(x)\neq y)
\)
denote the learner population error at iteration $t$.
Then, for all $k,t$,
\(
0 \le B_{k,t} \le 1-e_t.
\)
Consequently, for any programs $i,j$, the population utility gap satisfies
\(
U_{i,t}-U_{j,t}
\ge
(F_{i,t}-F_{j,t})-\gamma(1-e_t).
\)
Define the endogenous margin lower bound
\(
\varepsilon^{\mathrm{endo}}_{i,j,t}
:=
(F_{i,t}-F_{j,t})-\gamma(1-e_t).
\)
Then $U_{i,t}-U_{j,t}\ge \varepsilon^{\mathrm{endo}}_{i,j,t}$ for all $t$.
\end{lemma}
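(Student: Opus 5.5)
The argument is elementary and splits into two pieces: first the bound $0\le B_{k,t}\le 1-e_t$, then the utility gap inequality by subtraction. Throughout we read $B_{k,t}$ and $F_{k,t}$ as population quantities, i.e.\ as probabilities under the current mixture $\mathbb E_{u\sim q_t}D_u$. This is consistent with Definition~\ref{def:archive_utilities}, which calls $U_{j,t}$ the population utility. We likewise read $e_t$ as the learner error under the same mixture.

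For the blame bound, note that $B_{k,t}$ is the probability of the joint event $\{\hat y_t(x)=y\}\wedge\{p_k(f_{\theta_t}(x))\neq y\}$. Nonnegativity is immediate, since $B_{k,t}$ is a probability (or an average of indicators). For the upper bound we use the learner-gating hypothesis: the blame event is contained in the event $\{\hat y_t(x)=y\}$. Monotonicity of probability then gives
\[
B_{k,t}\le \mathbb P(\hat y_t(x)=y)=1-e_t .
\]
If one insists on the empirical version on $\mathcal D_t$, the same inclusion gives $B_{k,t}\le 1-\hat e_t$ with the empirical error $\hat e_t$. So the statement should be understood with $e_t$ computed on the same distribution (population or batch) as $B_{k,t}$. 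We will state this convention explicitly, since it is the only real subtlety.

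For the gap, expand
\[
U_{i,t}-U_{j,t}=(F_{i,t}-F_{j,t})-\gamma B_{i,t}+\gamma B_{j,t}.
\]
Drop the term $\gamma B_{j,t}\ge 0$, which is valid because $\gamma>0$. Then use $B_{i,t}\le 1-e_t$ from the first step to get $-\gamma B_{i,t}\ge -\gamma(1-e_t)$. Combining these yields
\[
U_{i,t}-U_{j,t}\ge (F_{i,t}-F_{j,t})-\gamma(1-e_t)=\varepsilon^{\mathrm{endo}}_{i,j,t}.
\]
This is the final claim.

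The hypothesis $0\le F_{k,t}\le1$ is not actually needed for the inequality. It only guarantees that the margin is a bounded quantity, and we will remark on this. The main obstacle is purely definitional: ensuring that $B$ and $e_t$ are measured on the same distribution, so that the event inclusion translates into the stated bound.
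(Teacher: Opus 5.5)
Your proposal is correct and follows the paper's proof. The paper gets the blame bound the same way: it bounds each per-sample blame contribution by the indicator of $\hat y_t(x)=y$ and takes expectations, which is your event-inclusion argument. It then gets the gap bound from $B_{i,t}-B_{j,t}\le B_{i,t}\le 1-e_t$, which is your step of dropping $\gamma B_{j,t}\ge 0$. Your caveat that $B_{k,t}$ and $e_t$ must be computed under the same distribution (population versus batch $\mathcal D_t$) is a valid clarification that the paper leaves implicit, as is your remark that the hypothesis $0\le F_{k,t}\le 1$ is not used.
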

\paragraph{Interpretation.}
Lemma~\ref{lem:endogenous_margin} gives a conservative learner-dependent utility
bound. It does not by itself imply that learner improvement enlarges utility
gaps: such a conclusion requires directional control of the blame difference
$B_{i,t}-B_{j,t}$.

\begin{theorem}[High-probability fixed-comparator archive suppression]
\label{thm:archive_suppression_subg}
Let archive scores evolve as
\(
s_{j,t+1}=s_{j,t}+\eta_p\,\widehat U_{j,t},
\qquad
w_{j,t}=\frac{e^{\beta s_{j,t}}}{\sum_{k=1}^m e^{\beta s_{k,t}}},
\)
with $\beta>0$, $\eta_p>0$, and $\widehat U_{j,t}=U_{j,t}+\xi_{j,t}$.
Assume the noise process satisfies Definition~\ref{def:archive_noise_subg_pairwise}.
Fix a nonempty subset $\mathcal S\subseteq[m]$. For each $j\notin\mathcal S$,
fix a comparator $i(j)\in\mathcal S$ and define the pairwise utility gap
\(
\varepsilon_{j,t}:=U_{i(j),t}-U_{j,t}.
\)
Then for any horizon $T\ge T_0$ and confidence level $\delta\in(0,1)$, with
probability at least $1-\delta$, the following holds uniformly for all
$t\in\{T_0,\dots,T\}$ and all $j\notin\mathcal S$:
\(
\frac{w_{j,t}}{\sum_{i\in\mathcal S} w_{i,t}}
\;\le\;
\frac{w_{j,T_0}}{w_{i(j),T_0}}
\exp\!\Big(
-\beta\eta_p\sum_{\tau=T_0}^{t-1}\varepsilon_{j,\tau}
+
\beta\eta_p\,\Gamma_{\mathrm{subG}}(t,\delta)
\Big),
\)
where
\(
\Gamma_{\mathrm{subG}}(t,\delta)
=
2\mu(t-T_0)
+
2\sigma\sqrt{2(t-T_0)\log\!\Big(\tfrac{m^2(T-T_0+1)}{\delta}\Big)}.
\)
\end{theorem}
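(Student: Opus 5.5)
First, bound the ratio by a single pairwise ratio and write it in closed form. Because $i(j)\in\mathcal S$ and all weights are positive,
\[
\frac{w_{j,t}}{\sum_{i\in\mathcal S} w_{i,t}}\le \frac{w_{j,t}}{w_{i(j),t}} .
\]
The Gibbs normalizers cancel, so this pairwise ratio equals $\exp(\beta(s_{j,t}-s_{i(j),t}))$. Unrolling the score recursion from $T_0$ to $t$ gives
\[
s_{j,t}-s_{i(j),t}
=(s_{j,T_0}-s_{i(j),T_0})-\eta_p\sum_{\tau=T_0}^{t-1}\varepsilon_{j,\tau}
+\eta_p\sum_{\tau=T_0}^{t-1}(\xi_{j,\tau}-\xi_{i(j),\tau}).
\]
Exponentiating the first term yields $w_{j,T_0}/w_{i(j),T_0}$ and the second yields the drift factor. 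It therefore remains to show that, with probability at least $1-\delta$, simultaneously for all $t\in\{T_0,\dots,T\}$ and all $j\notin\mathcal S$,
\[
\sum_{\tau=T_0}^{t-1}(\xi_{j,\tau}-\xi_{i(j),\tau})\le \Gamma_{\mathrm{subG}}(t,\delta).
\]

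Next, split each noise difference into a conditional mean and a martingale increment,
\[
\zeta_\tau:=(\xi_{j,\tau}-\xi_{i,\tau})-\mathbb E[\xi_{j,\tau}-\xi_{i,\tau}\mid\mathcal F_{\tau-1}].
\]
By part (i) of Definition~\ref{def:archive_noise_subg_pairwise}, the conditional means satisfy $|\mathbb E[\xi_{j,\tau}-\xi_{i,\tau}\mid\mathcal F_{\tau-1}]|\le 2\mu$. Summed over $t-T_0$ steps, this produces the $2\mu(t-T_0)$ term of $\Gamma_{\mathrm{subG}}$. By part (ii), each $\zeta_\tau$ is conditionally $2\sigma$-sub-Gaussian. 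We need $\zeta_\tau$ to be $\mathcal F_\tau$-measurable, which holds because $\mathcal F$ is the natural filtration of the dynamics and so contains the noise.

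Then control the martingale sum at a fixed $t$ with $n=t-T_0\ge1$; the case $n=0$ is trivial. Iterating the conditional MGF bound by the tower property shows that $\exp(\lambda\sum\zeta_\tau-\lambda^2(2\sigma)^2 n/2)$ is a supermartingale with expectation at most $1$. Markov's inequality with the optimized $\lambda$ gives the Azuma--Hoeffding tail
\[
\mathbb P\Bigl(\textstyle\sum_\tau\zeta_\tau>x\Bigr)\le \exp\!\bigl(-x^2/(2n(2\sigma)^2)\bigr).
\]
Set the failure probability to $\delta/(m^2(T-T_0+1))$. This gives
\[
x=2\sigma\sqrt{2n\log\!\bigl(m^2(T-T_0+1)/\delta\bigr)},
\]
which is exactly the second term of $\Gamma_{\mathrm{subG}}$.

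Finally, take a union bound over the at most $m^2$ ordered pairs, which covers every $(j,i(j))$, and over the $T-T_0+1$ values of $t$. The total failure probability is then at most $\delta$, and on the complementary event the displayed bound holds uniformly, which proves the theorem.

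The main obstacle is the probabilistic step, and two details need care. The first is the indexing of the filtration, so that conditioning on $\mathcal F_{\tau-1}$ genuinely makes the centered differences martingale increments. The second is that the comparator $i(j)$ is fixed in advance and not data-dependent, so that the union bound over pairs is legitimate. The bound is stated for every $t\le T$ rather than through a maximal inequality, so a plain union bound over times is enough and matches the $T-T_0+1$ factor inside the logarithm.
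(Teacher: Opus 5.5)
Your proposal is correct and follows the paper's proof essentially step for step. Both use the exact pairwise Gibbs-ratio identity with $\sum_{k\in\mathcal S}w_{k,t}\ge w_{i(j),t}$, split the noise difference into a conditional mean bounded by $2\mu$ plus a $(2\sigma)$-sub-Gaussian martingale increment, obtain the tail $\exp(-x^2/(8\sigma^2(t-T_0)))$ by a Chernoff argument, and finish with a union bound over the $m^2$ ordered pairs and the horizons. The only cosmetic difference is that you union over all $T-T_0+1$ values of $t$, including the trivial $t=T_0$, whereas the paper unions over the $T-T_0$ positive-length horizons; both give the same $\Gamma_{\mathrm{subG}}$.
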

\textbf{Interpretation.}
Theorem~\ref{thm:archive_suppression_subg} shows that a symbolic program that is
persistently less useful than one fixed comparator experiences decay in relative weight,
up to stochastic fluctuations induced by finite sampling.
The result holds with high probability under mild assumptions on utility
estimation noise and does not require the learner to be optimal or converged.
Crucially, suppression depends on cumulative utility gaps rather than
instantaneous performance, ensuring robustness to transient noise and short-term
reversals.
As a consequence, the archive dynamics favor sustained empirical usefulness
over momentary agreement, yielding a stable pruning effect even in a
nonstationary learning environment.

\begin{corollary}[Exponential suppression under a persistent signed margin]
\label{cor:archive_exponential_suppression}
Under the conditions of Theorem~\ref{thm:archive_suppression_subg}, suppose there
exist constants $\varepsilon>0$ and $T_1\ge T_0$ such that, for each
$j\notin\mathcal S$,
\(
\sum_{\tau=T_1}^{t-1}
\varepsilon_{j,\tau}
\;\ge\;
\varepsilon (t-T_1)
\qquad \forall\, t\le T.
\)
Then, with probability at least $1-\delta$, the archive weights satisfy
\(
\log\frac{w_{j,t}}{\sum_{i\in\mathcal S} w_{i,t}}
\;\le\;
 C_j
-
\beta\eta_p\,\varepsilon (t-T_1)
+
\beta\eta_p\,\Gamma_{\mathrm{subG}}(t,\delta),
\qquad
j\notin\mathcal S,
\)
for a constant $C_j$ depending on the score state at $T_1$.
In particular, if $\mu=0$, the exponent grows linearly in $t$, yielding
exponential suppression up to sublinear stochastic fluctuations.
\end{corollary}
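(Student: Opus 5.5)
The plan is to derive Corollary~\ref{cor:archive_exponential_suppression} from Theorem~\ref{thm:archive_suppression_subg}, re-anchored at time $T_1$ instead of $T_0$. The hypothesis only controls cumulative gaps from $T_1$ onward, so the cleanest route is to apply the theorem with starting index $T_1$ in place of $T_0$. This is legitimate because the theorem's argument uses only the score recursion from the starting index onward, and the noise assumptions of Definition~\ref{def:archive_noise_subg_pairwise} hold at every $t$. The theorem then gives, on a single event of probability at least $1-\delta$ and simultaneously for all $t\in\{T_1,\dots,T\}$ and all $j\notin\mathcal S$,
\[
\frac{w_{j,t}}{\sum_{i\in\mathcal S} w_{i,t}}
\le
\frac{w_{j,T_1}}{w_{i(j),T_1}}
\exp\!\Big(-\beta\eta_p\sum_{\tau=T_1}^{t-1}\varepsilon_{j,\tau}+\beta\eta_p\,\Gamma'(t,\delta)\Big).
\]
Here $\Gamma'$ is $\Gamma_{\mathrm{subG}}$ with $T_0$ replaced by $T_1$. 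Since $T_1\ge T_0$, we have $t-T_1\le t-T_0$ and $T-T_1+1\le T-T_0+1$. Both terms of $\Gamma_{\mathrm{subG}}$ are monotone in these quantities, so $\Gamma'(t,\delta)\le\Gamma_{\mathrm{subG}}(t,\delta)$ and we may replace $\Gamma'$ by the stated $\Gamma_{\mathrm{subG}}$.

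Next I take logarithms and substitute the signed-margin hypothesis. From $\sum_{\tau=T_1}^{t-1}\varepsilon_{j,\tau}\ge\varepsilon(t-T_1)$ we get $-\beta\eta_p\sum_{\tau=T_1}^{t-1}\varepsilon_{j,\tau}\le-\beta\eta_p\varepsilon(t-T_1)$. The prefactor gives
\[
C_j:=\log\frac{w_{j,T_1}}{w_{i(j),T_1}}=\beta\big(s_{j,T_1}-s_{i(j),T_1}\big),
\]
because the Gibbs normalizers cancel. So $C_j$ depends only on the score state at $T_1$, as the statement requires, and this yields the displayed inequality.

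For the "in particular" claim, set $\mu=0$. Then $\Gamma_{\mathrm{subG}}(t,\delta)=2\sigma\sqrt{2(t-T_0)\log(m^2(T-T_0+1)/\delta)}$, which is $O(\sqrt{t\log(T/\delta)})$. This is sublinear, while the drift term $-\beta\eta_p\varepsilon(t-T_1)$ is linear. Hence the log-ratio tends to $-\infty$ linearly, up to lower-order fluctuations, which is exponential suppression.

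The main obstacle is justifying the re-anchoring at $T_1$. If the theorem is read strictly with a fixed $T_0$, the alternative is to apply it at $T_0$ and split $\sum_{\tau=T_0}^{t-1}=\sum_{\tau=T_0}^{T_1-1}+\sum_{\tau=T_1}^{t-1}$. The first piece is deterministic given the trajectory, but it is random and not $\mathcal F_{T_1}$-free. Absorbing it into $C_j$ is acceptable only if $C_j$ is allowed to be a random quantity determined by the path up to $T_1$, namely the scores and utilities. I would present the re-anchored version as primary and note this split as a remark. A second minor point is the case $t<T_1$: there the hypothesis is vacuous, so the claim is asserted only for $T_1\le t\le T$.
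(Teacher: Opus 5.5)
Your proof is correct but takes a different route from the paper. The paper does not re-anchor. It applies Theorem~\ref{thm:archive_suppression_subg} once at the original $T_0$, takes logarithms, splits $\sum_{\tau=T_0}^{t-1}\varepsilon_{j,\tau}$ at $T_1$, and absorbs the early piece into
\[
C_j=\log\frac{w_{j,T_0}}{w_{i(j),T_0}}-\beta\eta_p\sum_{\tau=T_0}^{T_1-1}\varepsilon_{j,\tau}.
\]
This is exactly the alternative you sketch in your last paragraph; the paper simply accepts that $C_j$ is determined by the finite history through $T_1$.

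The paper's route needs no monotonicity check and uses the theorem's event and $\Gamma_{\mathrm{subG}}$ verbatim. Its constant contains only noise-free population gaps, because all noise after $T_0$ is charged to $\Gamma_{\mathrm{subG}}$.

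Your route is legitimate, since $T_0$ is a free starting index in the theorem and its proof uses only the score recursion and the conditional noise assumptions from that index on. It buys a constant $C_j=\beta(s_{j,T_1}-s_{i(j),T_1})$ that is literally a function of the score state at $T_1$, as the statement says. It also gives an intermediate fluctuation term $\Gamma'$ that only accumulates noise over $[T_1,t)$. The price is that the realized noise on $[T_0,T_1)$ is folded into $C_j$, and you need the (correct) check $\Gamma'\le\Gamma_{\mathrm{subG}}$.

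Your stated reason for preferring re-anchoring is not really a distinction. Your $C_j$ is just as random and path-dependent as the paper's, because $s_{\cdot,T_1}$ depends on $\widehat U_{\cdot,\tau}$ for $\tau<T_1$.

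Your restriction to $T_1\le t\le T$ is appropriate. It is implicit in the paper's split as well, since for $t<T_1$ the paper's constant would not give the displayed bound.
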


Corollary~\ref{cor:archive_exponential_suppression} strengthens
Theorem~\ref{thm:archive_suppression_subg} by identifying a simple sufficient
condition under which archive suppression becomes exponential.
If each non-elite program has a persistent signed utility gap relative to a
fixed comparator, then its relative weight decays exponentially fast, up to
sublinear stochastic fluctuations induced by finite-sample noise.
In this regime, archive selection behaves analogously to a replicator dynamic
with endogenous margins: once a program is consistently dominated, its
influence vanishes at a rate determined by the learning rate, inverse
temperature, and the magnitude of the persistent utility gap.
This formalizes how long-term symbolic competition induces effective archive
compression without requiring explicit pruning heuristics.

\subsection{Average learner stationarity under objective drift}
\label{sec:learner_tracking}

\begin{theorem}[Average SGD stationarity under objective drift]
\label{thm:sgd_tracking_drift}
Let $\{q_t\}_{t\ge 0}$ denote the Generator sequence and define the time--varying
mixture objective
\(
F_t(\theta)
:=
\mathbb{E}_{u\sim q_t}\,L_{\mathrm{aug},t}(\theta,u),
\)
where $L_{\mathrm{aug},t}$ is the current augmented loss. Let $\mathcal G_t$
denote the pre-gradient sigma-field, with $\theta_t$, $q_t$, $w_t$, and $F_t$
measurable with respect to $\mathcal G_t$.
Assume that each $F_t$ is bounded below on $\Theta$, i.e., there exists
$F_{\min}\in\mathbb{R}$ such that
\(
F_t(\theta)\ge F_{\min}
\)
for all $t\ge 0$ and $\theta\in\Theta$.
Let the Learner update be
\(
\theta_{t+1} = \theta_t - \eta\, g_t,
\)
Assume that for all $t\ge 0$ the stochastic gradient $g_t$ satisfies:
\begin{enumerate}
\item[(i)] \emph{Unbiasedness:}
\(
\mathbb{E}\!\left[g_t \mid \mathcal G_t\right]
=
\nabla F_t(\theta_t).
\)
\item[(ii)] \emph{Bounded conditional variance:}
There exists $\sigma^2>0$ such that
\(
\mathbb{E}\!\left[
\|g_t - \nabla F_t(\theta_t)\|^2
\,\middle|\,
\mathcal G_t
\right]
\le
\sigma^2.
\)
\end{enumerate}
Assume further that $\Theta=\mathbb R^d$ and each $F_t$ is differentiable and
$L$--smooth, i.e.,
\(
\|\nabla F_t(\theta)-\nabla F_t(\theta')\|
\le
L\|\theta-\theta'\|,
\qquad
\forall\,\theta,\theta'\in\Theta.
\)
Then, for any horizon $T\ge 1$ and stepsize $\eta\in(0,1/L]$,
\(
\frac{1}{T}\sum_{t=0}^{T-1}
\mathbb{E}\!\left[\|\nabla F_t(\theta_t)\|^2\right]
\;\le\;
\frac{2}{\eta T}\big(F_0(\theta_0)-F_{\min}\big)
\;+\;
\eta L\,\sigma^2
\;+\;
\frac{2}{\eta T}\sum_{t=0}^{T-1}\mathbb E[\delta_t],
\)
where the one--step uniform objective drift is defined as
\(
\delta_t
:=
\sup_{\theta\in\Theta}
\big|F_{t+1}(\theta)-F_t(\theta)\big|.
\)
\end{theorem}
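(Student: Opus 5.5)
The argument is the usual descent-lemma proof for nonconvex SGD, with one extra step: the objective $F_t$ changes between iterations, so we pay a drift term when re-indexing $F_t$ to $F_{t+1}$. Telescoping then collects these costs into $\sum_t\mathbb E[\delta_t]$.

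First, apply $L$-smoothness of $F_t$ at $\theta_{t+1}=\theta_t-\eta g_t$:
\[
F_t(\theta_{t+1})\le F_t(\theta_t)-\eta\langle\nabla F_t(\theta_t),g_t\rangle+\tfrac{L\eta^2}{2}\|g_t\|^2 .
\]
Take conditional expectation given $\mathcal G_t$. Unbiasedness (i) and the variance bound (ii) give $\mathbb E[\|g_t\|^2\mid\mathcal G_t]\le\|\nabla F_t(\theta_t)\|^2+\sigma^2$, so
\[
\mathbb E[F_t(\theta_{t+1})\mid\mathcal G_t]\le F_t(\theta_t)-\eta\bigl(1-\tfrac{L\eta}{2}\bigr)\|\nabla F_t(\theta_t)\|^2+\tfrac{L\eta^2\sigma^2}{2}.
\]
Since $\eta\le 1/L$, the coefficient satisfies $1-L\eta/2\ge 1/2$, so the gradient term is at most $-\tfrac{\eta}{2}\|\nabla F_t(\theta_t)\|^2$.

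Second, switch from $F_t$ to $F_{t+1}$ using the definition of $\delta_t$:
\[
F_{t+1}(\theta_{t+1})\le F_t(\theta_{t+1})+\delta_t .
\]
This holds pathwise, since $\delta_t$ is a supremum over all $\theta$. One could instead route this through Lemma~\ref{lem:Delta_le_delta}, but the pathwise inequality is simpler. It also sidesteps the fact that $F_{t+1}$ depends on $q_{t+1},w_{t+1}$, which may depend on $\theta_{t+1}$ and so need not be $\mathcal G_t$-measurable. Taking total expectations and combining with the first step gives
\[
\tfrac{\eta}{2}\mathbb E\|\nabla F_t(\theta_t)\|^2\le\mathbb E[F_t(\theta_t)]-\mathbb E[F_{t+1}(\theta_{t+1})]+\mathbb E[\delta_t]+\tfrac{L\eta^2\sigma^2}{2}.
\]

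Third, sum over $t=0,\dots,T-1$. The function-value terms telescope to $F_0(\theta_0)-\mathbb E[F_T(\theta_T)]$, and the lower bound gives $\mathbb E[F_T(\theta_T)]\ge F_{\min}$. Multiplying by $2/(\eta T)$ yields exactly the stated bound $\frac{2}{\eta T}(F_0(\theta_0)-F_{\min})+\eta L\sigma^2+\frac{2}{\eta T}\sum_t\mathbb E[\delta_t]$.

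The step needing most care is the measurability and integrability bookkeeping. The expectations $\mathbb E[F_t(\theta_t)]$ and $\mathbb E[\delta_t]$ must be finite for the telescoping to be valid. This follows from the boundedness $0\le L_{\mathrm{aug}}\le M$ established earlier, which gives $|F_t|\le M$ and $\delta_t\le 2M$. The drift must also be handled pathwise, before any conditioning on $\mathcal G_t$, as in the second step.
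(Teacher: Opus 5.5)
Your proposal is correct and follows the paper's proof essentially step for step. Both arguments use the descent lemma, condition on $\mathcal G_t$ with unbiasedness and the variance bound, bound the coefficient by $\eta/2$ for $\eta\le 1/L$, apply the pathwise drift inequality $F_{t+1}(\theta_{t+1})\le F_t(\theta_{t+1})+\delta_t$ before taking total expectations, and telescope against $F_{\min}$. Your remark on integrability via $0\le L_{\mathrm{aug}}\le M$ is a harmless addition that the paper leaves implicit.
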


\paragraph{Interpretation.}
Theorem~\ref{thm:sgd_tracking_drift} bounds the average stationarity violation
$\frac{1}{T}\sum_{t=0}^{T-1}\mathbb E\|\nabla F_t(\theta_t)\|^2$ of stochastic
gradient descent when optimizing a time--varying mixture objective.
The first two terms coincide with standard nonconvex SGD guarantees for a fixed
objective, capturing optimization error due to initialization and gradient
noise.
The additional term $\frac{2}{\eta T}\sum_{t=0}^{T-1}\mathbb E[\delta_t]$ explicitly quantifies
the effect of objective nonstationarity, measuring how rapidly the
underlying objective itself changes across iterations.
Consequently, when Generator and archive movement make
$\sum_{t=0}^{T-1}\mathbb E[\delta_t]$ sublinear in $T$, the drift contribution to
average stationarity vanishes. This does not by itself establish
parameter-space tracking of a moving minimizer.

\section{PRAXIS Experimental Setup}
\label{sec:experiments}

We evaluate the same Generator--Learner--Archive process across three domains: MIT Indoor visual robustness, GraphLog relational reasoning, and CLRS algorithmic graph reasoning.  In MIT Indoor, universes encode image training conditions defined by draw count, augmentation strength, corruption type, and corruption severity; held-out universes use disjoint corruption/severity settings.  In GraphLog, universes correspond to graph-reasoning worlds and target relation families in binary relational query answering.  In CLRS, universes encode graph size, graph family, density, and edge-weight regime for a shortest-path threshold task; the hard split tests transfer to larger weighted graphs.  Dataset and universe details are provided next.

All experiments instantiate the same PRAXIS abstraction: a finite universe set
$\mathcal U$, a KL-regularized Generator over universes, a Learner trained on
samples from selected universes, and a symbolic Archive whose program weights
evolve by utility-based Gibbs updates. The three domains differ in modality and
task semantics, but not in the closed-loop structure being tested.

\begin{table}[h]
\centering
\scriptsize
\caption{Dataset scale and universe splits used in the multidomain evaluation.}
\label{tab:dataset_sizes}
\begin{tabular}{llll}
\toprule
Domain & Task scale & Train universes & Held-out universes \\
\midrule
MIT Indoor & $1{,}200$ images & $300$ & $27$ \\
GraphLog & $200$ tasks / $1{,}800$ binary examples & $43$ & $24$ \\
CLRS & $4{,}080$ generated graph examples & $16$ & $18$ \\
\bottomrule
\end{tabular}
\end{table}

\paragraph{MIT Indoor visual robustness.}
The visual-domain experiment uses a controlled subset of the MIT Indoor Scenes
dataset. We select $12$ scene classes with $100$ images per class, yielding a
balanced dataset of $1{,}200$ images. The selected classes are auditorium,
bedroom, bookstore, classroom, computer room, corridor, grocery store, kitchen,
library, living room, office, and restaurant.

Universes are defined by stochastic training conditions rather than by disjoint
image pools. Each MIT Indoor training universe is a tuple of training draw count
$\{400,800,1200,1600,2000\}$, augmentation strength from $5$ evenly spaced
values in $[0,0.5]$, corruption type
\{\texttt{none}, \texttt{blur}, \texttt{jpeg}, \texttt{noise}\}, and corruption
severity $\{0.0,0.3,0.6\}$. This gives $300$ training universes. Held-out
universes use evaluation draw count $2400$, augmentation strengths
$\{0.5,0.75,0.9\}$, corruption types
\{\texttt{blur}, \texttt{jpeg}, \texttt{noise}\}, and severities
$\{0.5,0.75,0.85\}$, giving $27$ evaluation universes.

\paragraph{GraphLog relational reasoning.}
The GraphLog experiment is formulated as binary relational query answering.
Each task contains labeled graph structure and query-answer examples. Each
example asks whether a target relation is entailed for a queried node pair under
the sampled rule world. The experiment used a 
rule-balanced file containing $200$
tasks and $1{,}800$ binary examples where universes are indexed
by rule world and target relation family. The fixed split used for all methods
contains $43$ training universes and $24$ held-out universes.

\paragraph{CLRS graph reasoning.}
The CLRS-domain experiment is a CLRS-style algorithmic graph-reasoning task
rather than a benchmark number on the full CLRS suite. Each example is a graph
instance with graph size, graph family, edge density, and edge-weight regime
parameters. The binary target asks whether a shortest-path quantity crosses a
threshold. The hard split contains $16$ training universes and $18$ held-out
universes. We generate $120$ examples per universe, for $4{,}080$ examples in
total across train and held-out universes, and use $40$ probe examples per
training universe for generator and archive diagnostics.

Baselines are matched by role across domains: fixed-mixture empirical risk minimzation (ERM), adaptive universe selection without archive regularization, Group Domain Random Optimization (DRO)-style and bandit-style samplers. Performance in comparison to PRAXIS is reported in Appendix~\ref{app:component_ablations}.  These comparisons contextualize predictive performance, while the main empirical question is whether the predicted Generator, Learner, and Archive dynamics recur across domains.

\section{Results}
\label{sec:results}

We organize results around three research questions (RQ) aligned with the three theoretical mechanisms. They are presented as follows.

\textbf{RQ1: Does the Generator adapt while remaining stable?}
Figure~\ref{fig:multidomain_generator_normalized} compares the temporal shape of Generator entropy, maximum universe probability, and KL drift across domains using within-domain min--max normalization.  The normalized curves show the same qualitative pattern: entropy decreases, maximum universe probability increases, and the KL drift proxy remains bounded rather than diverging.  Because normalization hides absolute scale, The table in Figure ~\ref{fig:multidomain_generator_normalized} reports the corresponding absolute quantities.
The absolute values show domain-specific concentration strength.  MIT Indoor remains comparatively diffuse, with entropy decreasing from $4.68$ to $3.91$ and maximum universe probability increasing from $0.009$ to $0.073$.  GraphLog concentrates sharply, with entropy decreasing from $3.14$ to $0.38$ and maximum probability increasing from $0.043$ to $0.925$.  CLRS lies between these cases, with entropy decreasing from $2.77$ to $2.31$ and maximum probability increasing from $0.063$ to $0.198$.  Despite these differences, the maximum observed drift proxy remains small in all cases: $0.0128$ for MIT Indoor, $0.0173$ for GraphLog, and $0.0016$ for CLRS.  This supports the prediction of Theorem~\ref{thm:kl_drift}: the Generator can concentrate the training distribution while keeping one-step movement controlled.

\begin{figure}[!h]
\centering

\begin{minipage}[c]{0.64\textwidth}
    \centering
    \includegraphics[width=\linewidth]{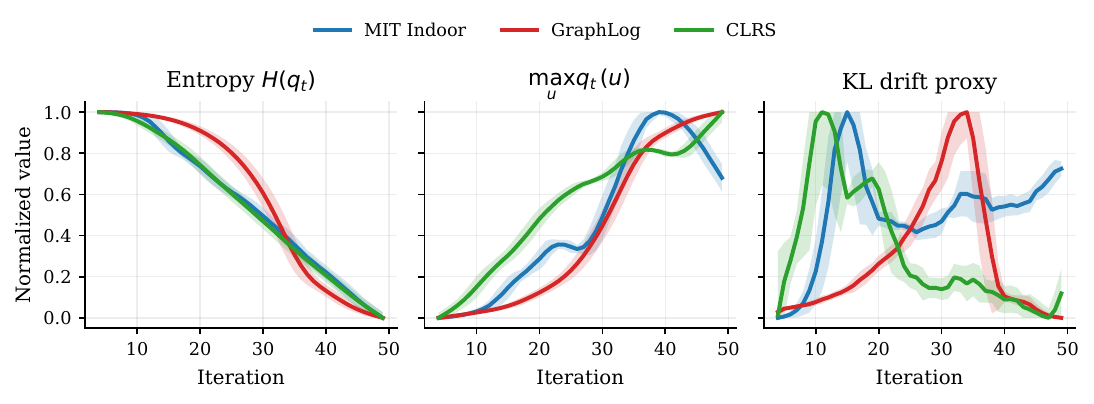}
\end{minipage}
\hfill
\begin{minipage}[c]{0.32\textwidth}
    \centering
    \scriptsize
    \setlength{\tabcolsep}{3pt}
    \renewcommand{\arraystretch}{0.95}
    \begin{tabular}{lccc}
    \toprule
    Domain & Ent. & Max $q$ & Drift \\
    \midrule
    MIT & $4.68{\to}3.91$ & $.009{\to}.073$ & .0128 \\
    GraphLog & $3.14{\to}.38$ & $.043{\to}.925$ & .0173 \\
    CLRS & $2.77{\to}2.31$ & $.063{\to}.198$ & .0016 \\
    \bottomrule
    \end{tabular}
\end{minipage}

\caption{Generator dynamics across domains. Curves show within-domain min--max normalized rolling means; bands show one rolling standard deviation.}
\label{fig:multidomain_generator_normalized}
\end{figure}

\textbf{RQ2: Does learner optimization remain stable under the induced nonstationarity?}
Figure~\ref{fig:multidomain_learner_stability} shows the learner-side response to the moving training distribution.  In all three domains, the rolling training cross-entropy decreases rather than destabilizing under adaptive sampling.  MIT Indoor decreases from $1.007$ to $0.080$, GraphLog from $0.702$ to $0.109$, and CLRS from $0.617$ to $0.421$ with a minimum of $0.361$.  These observations are consistent with stable optimization under a moving objective, but training loss is not a direct measurement of the gradient-norm quantity bounded in Theorem~\ref{thm:sgd_tracking_drift}.

\begin{figure}[!h]
\centering

\begin{minipage}[c]{0.32\textwidth}
  \centering
  \includegraphics[width=\linewidth]{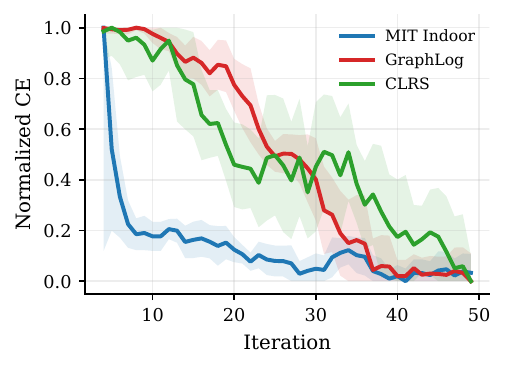}
\end{minipage}
\hfill
\begin{minipage}[c]{0.52\textwidth}
  \centering
  \scriptsize
  \begin{tabular}{lccc}
  \toprule
  Domain & First & Final & Min \\
  \midrule
  MIT & 1.007 & 0.080 & 0.068 \\
  GraphLog & 0.702 & 0.109 & 0.109 \\
  CLRS & 0.617 & 0.421 & 0.361 \\
  \bottomrule
  \end{tabular}
\end{minipage}

\caption{Normalized Learner stability across domains. Curves show rolling training cross-entropy min--max normalized within each domain; shaded bands show one rolling standard deviation after the same normalization.}
\label{fig:multidomain_learner_stability}
\end{figure}

\textbf{RQ3: Does the symbolic archive concentrate and suppress non-elite programs?}
Figure~\ref{fig:multidomain_archive_weights} shows archive weight evolution.  MIT Indoor exhibits moderate concentration, with maximum archive weight increasing from $0.120$ to $0.181$.  GraphLog and CLRS show much stronger concentration: GraphLog increases from $0.141$ to $0.987$, while CLRS increases from $0.164$ to $0.964$.  Archive entropy decreases in all three domains, indicating that the archive does not remain a diffuse pool of interchangeable programs.
Figure~\ref{fig:multidomain_suppression} reports the original envelope view: worst non-elite, median non-elite, and most-suppressed non-elite ratios relative to the final elite set with $k=2$.  The median non-elite ratio decreases from $-0.95$ to $-26.19$ on MIT Indoor, from $-0.49$ to $-8.03$ on GraphLog, and from $-1.04$ to $-18.29$ on CLRS.  This demonstrates empirical concentration. The fixed-comparator theorem predicts such decay only when a named comparator has a persistent signed cumulative utility advantage; verifying that assumption requires logging the corresponding pairwise utility gaps.

\begin{figure*}[!h]
  \centering
  \includegraphics[width=0.65\textwidth]{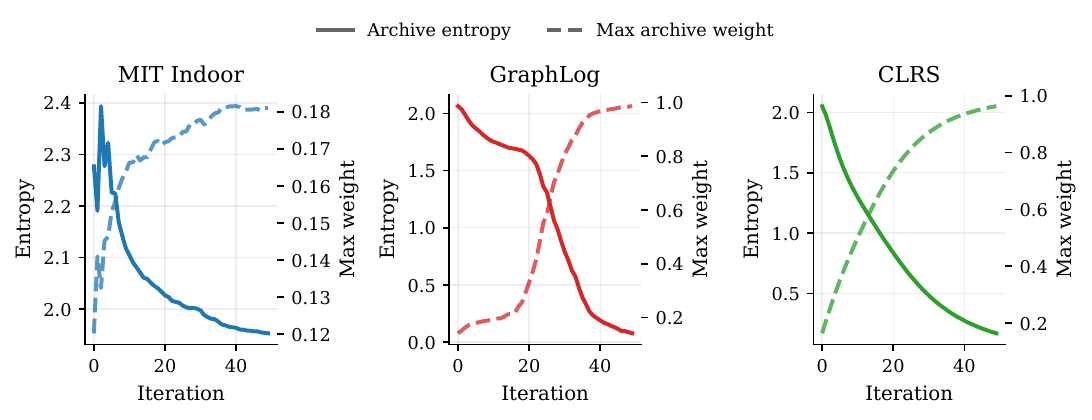}
  \caption{Archive weight evolution across domains.  Solid curves show entropy}
  \label{fig:multidomain_archive_weights}
\end{figure*}

\begin{figure*}[!h]
  \centering
  \includegraphics[width=0.65\textwidth]{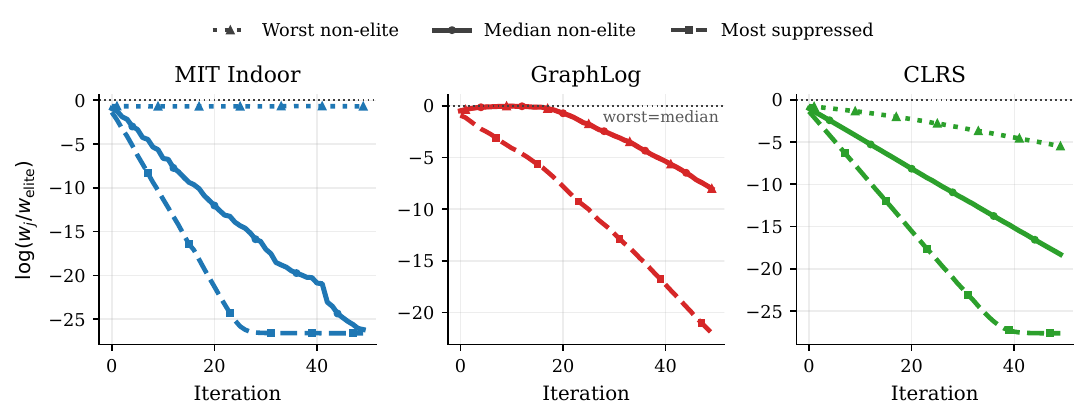}
  \caption{Archive suppression envelopes for elite size $k=2$.}
  \label{fig:multidomain_suppression}
\end{figure*}

\section{Conclusion and Future Work}

We introduced \textsc{PRAXIS}, a framework for modeling self-improving learning as the coupled evolution of adaptive universe selection, gradient-based learning, and symbolic archive regularization. The analysis separates Generator- and archive-induced objective drift, establishes average SGD stationarity under cumulative drift, and proves fixed-comparator suppression under persistent signed utility advantages. Experiments across visual robustness, relational graph reasoning, and algorithmic graph reasoning exhibit stable optimization and archive concentration, although their effect on predictive performance is domain dependent. The present formulation is intentionally controlled: universes are finite and archive programs are simple deterministic rules. Extending PRAXIS to continuous or latent universe spaces, richer compositional or probabilistic archives, and a unified closed-loop theory of long-horizon co-evolution are natural directions for future work.

\bibliographystyle{neurips_2026}
\bibliography{neurips_2026}

\appendix
\section*{Appendix}

\section{Related Work}

\textbf{Self-Improving Learning Frameworks.}
A growing body of work studies self-improving and agentic systems in which models iteratively modify their data, policies, prompts, or intermediate reasoning artifacts through feedback-driven loops, including self-training frameworks \citep{huang2025r}, foundational agent architectures \citep{xia2025agent0,shang2024agentsquare}, and population-based evolutionary agents \citep{wang2025evoagentx}. These approaches demonstrate substantial empirical gains by leveraging archives of prior behaviors, reasoning traces, or candidate solutions, often guided by reward models, verifiers, or task success signals. However, such systems typically emphasize capability growth and system-level behavior, with limited formal analysis of the learning dynamics induced by repeated self-modification, particularly under non-stationary objectives. PRAXIS is complementary to this line of work: rather than proposing a new agent architecture or self-improvement mechanism, it provides a learning-theoretic framework for modeling and analyzing the coupled evolution of data selection, optimization, and symbolic structure that underlies many self-improving systems. In this sense, PRAXIS serves as an abstract dynamical model that captures common mechanisms in agentic and self-evolving frameworks, while enabling explicit guarantees on objective drift, stability, and long-horizon tracking.

\textbf{Adaptive Data Selection and Learning under Non-Stationary Objectives.}
ERM \cite{hanneke2024universal} constitutes the classical learning paradigm, optimizing average risk under a fixed and stationary data distribution. A large body of work extends ERM by adapting the training distribution over time, thereby implicitly inducing non-stationary learning objectives. Curriculum Learning (CL) methods reorder or reweight examples to accelerate convergence or improve representation learning, beginning with fixed difficulty schedules \citep{bengio2009curriculum} and extending to modern variants that emphasize training efficiency in large visual backbones \citep{wang2023efficienttrain} or integrate active selection criteria \citep{jafarpour2021active}. Distributionally Robust Optimization (DRO) addresses distribution shift by replacing the average-risk objective with a worst-case or group-wise objective, leading to minimax formulations whose solutions evolve as group weights are updated; recent work provides near-optimal algorithms and lower bounds for group DRO \citep{soma2022near}, multi-expert robust formulations \citep{jeongmulti}, and semi-supervised extensions that mitigate group sparsity using unlabeled data \citep{awasthi2024semi,jung2025per}. From a complementary perspective, bandit algorithms formalize adaptive data or policy selection as an online decision problem, balancing exploration and exploitation through regret-minimizing updates or, more recently, enabling valid statistical inference under adaptive data collection \citep{bibaut2021post,gutowski2021gorthaur}. Such methods have also been applied to adaptive components in large language model pipelines, including decoding and tool selection \citep{bouneffouf2025multi}. While these approaches differ in mechanism, they share a common structure: the learner is trained against an objective that changes over time as a consequence of adaptive selection, yet the resulting non-stationarity is typically treated as exogenous, adversarial, or heuristic rather than as a first-class dynamical quantity. In contrast, PRAXIS explicitly models data selection, parameter learning, and symbolic structure as a coupled closed-loop system, enabling theoretical guarantees that directly relate the rate and geometry of objective drift to stability and tracking behavior in non-stationary learning environments.

\textbf{Symbolic and Structured Inductive Bias in Learning.}
A broad class of neuro-symbolic methods incorporates symbolic structure into learning pipelines to improve compositionality, interpretability, or generalization. Differentiable logic frameworks such as DeepProbLog \cite{manhaeve2018deepproblog} and Scallop \cite{li2023scallop} integrate neural predicates with probabilistic or differentiable logic programs, enabling end-to-end training while predominantly assuming a fixed and trusted symbolic program provided as background knowledge. Complementary approaches address non-differentiable symbolic components through gradient estimation techniques, such as Inputâ€“Output Sample Gradient Estimation (ISED) \cite{solko2024data}, which enables data-efficient optimization of neural programs containing black-box symbolic modules or external APIs without access to internal structure. Other lines of work, including the Neuro-Symbolic Concept Learner (NSCL) \cite{mao2019neuro} and related program induction models, learn latent symbolic programs for reasoning over structured representations, demonstrating strong compositional generalization but focusing primarily on task-level accuracy rather than long-horizon learning dynamics. More recent programmable neuro-symbolic systems, such as Dolphin \cite{naik2024dolphin}, combine symbolic modules with adaptive task distributions to scale reasoning across problem families, while symbolic discovery methods extract interpretable rules or expressions from trained neural networks as post-hoc explanations. Across these approaches, symbolic structure is typically treated as fixed, correctness-assumed, or optimized solely for predictive performance, with limited consideration of how symbolic components should be regulated under non-stationary learning conditions. In contrast, PRAXIS treats symbolic programs as fallible inductive biases whose influence is dynamically modulated through interaction with adaptive data selection and parameter learning. By explicitly modeling the coupled evolution of symbolic structure, data distributions, and model parameters, PRAXIS enables theoretical analysis of stability, suppression, and tracking behavior in learning systems operating under endogenous non-stationarity.

\section{Symbolic Archive Composition}
\label{app:symbolic_archive}

The symbolic archive supplies a finite set of deterministic programs that
produce auxiliary predictions or structural preferences. It is not a separate
semantic oracle. Archive programs are evaluated through the same utility, blame,
and Gibbs-weight update mechanism across domains, so the reported suppression
behavior reflects PRAXIS dynamics rather than manual selection at evaluation
time.
When an archive program is used in the learner loss, its discrete output is
converted into a stop-gradient pseudo-label and compared with the learner's
soft prediction through the clipped discrepancy in
Eq.~\eqref{eq:archive_discrepancy}.

\begin{table}[h]
\centering
\scriptsize
\caption{Symbolic archive sizes and program families.}
\label{tab:archive_families}
\begin{tabular}{lll}
\toprule
Domain & Archive size & Program families \\
\midrule
MIT Indoor & $10$ initial programs & ranking, threshold, symmetry, temporal, distributional \\
GraphLog & $8$ programs & constant, prior, descriptor, uncertainty-gated \\
CLRS & $8$ programs & algorithmic, local graph rule, reachability, sparsity, weighted prior \\
\bottomrule
\end{tabular}
\end{table}

\paragraph{MIT Indoor archive.}
For MIT Indoor, each archive program maps the learner's class-probability vector
to a hard label. The initial archive contains $10$ manually seeded programs
from five families: ranking-based rules such as argmax, second-best, and top-2
margin switching; threshold-based rules based on confidence or entropy;
symmetry-breaking rules such as even/odd or low-index preferences; temporal and
modular rules conditioned on iteration or modulo patterns; and distributional
preferences such as center-biased label selection.

\paragraph{GraphLog archive.}
The GraphLog archive contains $8$ programs: \texttt{AlwaysYes},
\texttt{AlwaysNo}, \texttt{TargetPrior}, \texttt{WorldPrior},
\texttt{WorldTargetPrior}, \texttt{DescriptorLookup},
\texttt{LowConfidenceNo}, and \texttt{LowConfidenceTargetPrior}. These cover
constant predictors, relation and world priors, descriptor-based symbolic
lookup, and uncertainty-gated fallback rules. The argmax-style model program is
excluded in the main run so that archive concentration is not driven by copying
the learner's current prediction.

\paragraph{CLRS archive.}
The CLRS archive contains $8$ programs: \texttt{UnweightedBFS},
\texttt{DirectOrTwoHop}, \texttt{ChainDistancePrior}, \texttt{ReachablePrior},
\texttt{DirectEdgePrior}, \texttt{DegreeConnectivityPrior},
\texttt{SparseChainPrior}, and \texttt{WeightedSlackPrior}. These programs
encode lightweight algorithmic and graph-structural heuristics: unweighted
shortest-path search, local edge or two-hop rules, reachability and degree
priors, sparsity-sensitive chain priors, and weighted slack heuristics. Exact
oracle and argmax-copy programs are excluded from the main run.

\section{Implementation and Reproducibility Details}
\label{app:implementation_details}

All adaptive methods operate over the same train and held-out universe splits
within each domain. Uniform / ERM keeps a fixed uniform distribution over
universes. GroupDRO-style sampling upweights universes with lower recent
held-out accuracy. Bandit-style sampling treats universes as adaptive arms.
Generator-only keeps the KL-regularized PRAXIS Generator but disables archive
regularization. Full PRAXIS uses both adaptive universe selection and archive
regularization. The main multidomain diagnostics use $50$ PRAXIS iterations.

\subsection{Component Ablation Analysis}
\label{app:component_ablations}

This subsection evaluates the contribution of adaptive universe selection and symbolic archive adaptation.  We compare five roles documented in Table ~\ref{tab:praxis_ablations}: ERM with a fixed uniform mixture and no archive, adaptive generator training with no archive regularization, a frozen archive with adaptive generator, a frozen generator with adaptive archive, and full PRAXIS.  The goal is not to claim that every component dominates in every domain, but to identify when performance gains arise from the coupled system and when one mechanism carries most of the benefit.

\begin{table*}[t]
\centering
\scriptsize
\caption{Component ablations across domains.  Mean, Min, CVaR$_{10}$, and Std are computed across held-out universes.}
\label{tab:praxis_ablations}
\begin{tabular}{llrrrr}
\toprule
Domain & Method & Mean & Min & CVaR$_{10}$ & Std \\
\midrule
MIT Indoor & ERM fixed & 0.678 & \textbf{0.617} & 0.630 & \textbf{0.029} \\
MIT Indoor & No archive & 0.658 & 0.574 & 0.618 & 0.038 \\
MIT Indoor & Frozen archive & 0.672 & 0.590 & 0.624 & 0.034 \\
MIT Indoor & Frozen generator & 0.661 & 0.589 & 0.603 & 0.031 \\
MIT Indoor & Full PRAXIS & \textbf{0.694} & 0.602 & \textbf{0.639} & 0.031 \\
\midrule
GraphLog & ERM fixed & 0.718 & 0.429 & 0.429 & 0.282 \\
GraphLog & No archive & \textbf{0.912} & \textbf{0.857} & \textbf{0.857} & \textbf{0.053} \\
GraphLog & Frozen archive & 0.709 & 0.222 & 0.222 & 0.299 \\
GraphLog & Frozen generator & 0.746 & 0.429 & 0.429 & 0.258 \\
GraphLog & Full PRAXIS & 0.851 & 0.714 & 0.714 & 0.106 \\
\midrule
CLRS & ERM fixed & 0.718 & 0.583 & 0.596 & 0.063 \\
CLRS & No archive & 0.751 & 0.633 & 0.654 & 0.055 \\
CLRS & Frozen archive & 0.753 & \textbf{0.650} & 0.654 & 0.064 \\
CLRS & Frozen generator & 0.739 & 0.642 & 0.642 & \textbf{0.054} \\
CLRS & Full PRAXIS & \textbf{0.763} & \textbf{0.650} & \textbf{0.658} & 0.059 \\
\bottomrule
\end{tabular}
\end{table*}

The ablations reveal different component roles across domains.  On MIT Indoor, full PRAXIS obtains the strongest mean and CVaR$_{10}$, while freezing either adaptive component reduces tail robustness.  On CLRS, full PRAXIS also gives the best mean and CVaR$_{10}$, although the margin over the no-archive and frozen-archive variants is modest.  GraphLog behaves differently: adaptive universe selection without archive regularization is strongest, while frozen archive and frozen generator variants both underperform full PRAXIS.  This indicates that the GraphLog split is primarily solved by adaptive data selection, whereas the archive contributes more clearly to robustness in MIT Indoor and more weakly in CLRS.  These results support the modular interpretation of PRAXIS: the framework exposes the coupled dynamics, but the empirical value of each component depends on the domain and archive quality.

\FloatBarrier

\section{Theory Proofs}

\subsection{Proofs for Section \ref{subsec:generator-induced drift}}
\begin{proof}[Proof of Theorem~\ref{thm:kl_drift}]
Add and subtract $\mathbb E_{u\sim q_t}L_{\mathrm{aug},t+1}(\theta,u)$.
For every $\theta$,
\begin{align*}
|F_{t+1}(\theta)-F_t(\theta)|
&\le \left|\mathbb E_{q_{t+1}}L_{\mathrm{aug},t+1}
-\mathbb E_{q_t}L_{\mathrm{aug},t+1}\right|\\
&\quad+\left|\mathbb E_{q_t}
(L_{\mathrm{aug},t+1}-L_{\mathrm{aug},t})\right|\\
&\le M\|q_{t+1}-q_t\|_1+a_t.
\end{align*}
Taking the supremum over $\theta$ and applying Pinsker's inequality proves
\[
\delta_t\le M\|q_{t+1}-q_t\|_1+a_t
\le M\sqrt{2\mathrm{KL}(q_{t+1}\Vert q_t)}+a_t.
\]
When only the weights change, the archive contribution is a weighted sum of
terms in $[0,1]$, so the triangle inequality gives
$a_t\le\lambda\|w_{t+1}-w_t\|_1$. The convergence claim follows immediately
when both terms on the right tend to zero.
\end{proof}

\begin{proof}[Proof of Lemma~\ref{lem:Delta_le_delta}]
Fix any $t\ge 0$.
For every realization of the algorithmic randomness up to iteration $t+1$, we
have $\theta_{t+1}\in\Theta$.
Hence,
\[
\big|F_{t+1}(\theta_{t+1})-F_t(\theta_{t+1})\big|
\;\le\;
\sup_{\theta\in\Theta}\big|F_{t+1}(\theta)-F_t(\theta)\big|
\;=\;
\delta_t.
\]
Taking expectations over the randomness preserves the inequality, yielding
$\Delta_t\le \mathbb E[\delta_t]$.
\end{proof}

\begin{proof}[Proof of Corollary~\ref{cor:generator_drift}]
Theorem~\ref{thm:kl_drift} and the two exponential component bounds give
\[
\delta_t\le Mc e^{-\kappa(t-T_0)}+d e^{-\rho(t-T_0)}.
\]
Taking expectations and applying Lemma~\ref{lem:Delta_le_delta} proves the
along-iterate claim.
\end{proof}

\subsection{Proofs for Section \ref{subsec:archive_suppression_under_utility_advantage} }

\begin{proof}[Proof of Lemma~\ref{lem:endogenous_margin}]
Fix an arbitrary iteration $t$ and program index $k$.

\paragraph{Step 1: Bounding the blame term.}
By assumption, blame is learner-gated i.e., blame is incurred only on examples
that the learner classifies correctly at iteration $t$.
Equivalently, $B_{k,t}$ is an expectation of a bounded indicator (or bounded
loss) multiplied by the learner-correctness gate
$\mathbbm{1}\{\hat y_t(x)=y\}$.
In particular, for every $x$, the per-sample blame contribution is nonnegative
and at most $1$ whenever the gate is active, and it is $0$ when the gate is
inactive. Therefore, almost surely,
\[
0 \;\le\; \text{(per-sample blame contribution)} \;\le\; \mathbbm{1}\{\hat y_t(x)=y\}.
\]
Taking expectations over $(x,y)$ yields
\[
0 \;\le\; B_{k,t}
\;\le\; \mathbb{P}(\hat y_t(x)=y).
\]
By definition of the learner population error
$e_t := \mathbb{P}(\hat y_t(x)\neq y)$, we have
$\mathbb{P}(\hat y_t(x)=y)=1-e_t$, hence
\begin{equation}
0 \;\le\; B_{k,t} \;\le\; 1-e_t.
\label{eq:blame_bound}
\end{equation}

\paragraph{Step 2: Lower bounding utility gaps.}
Recall the population utility definition
$U_{j,t} := F_{j,t} - \gamma B_{j,t}$ with $\gamma\ge 0$.
For any two programs $i$ and $j$,
\begin{align}
U_{i,t}-U_{j,t}
&= (F_{i,t}-\gamma B_{i,t}) - (F_{j,t}-\gamma B_{j,t}) \nonumber\\
&= (F_{i,t}-F_{j,t}) - \gamma(B_{i,t}-B_{j,t}).
\label{eq:utility_gap_expand}
\end{align}
Using the blame bounds \eqref{eq:blame_bound} for each program,
we have $0\le B_{i,t}\le 1-e_t$ and $0\le B_{j,t}\le 1-e_t$, which implies
\[
B_{i,t}-B_{j,t} \;\le\; B_{i,t} \;\le\; 1-e_t.
\]
Substituting into \eqref{eq:utility_gap_expand} gives
\[
U_{i,t}-U_{j,t}
\;\ge\;
(F_{i,t}-F_{j,t}) - \gamma(1-e_t).
\]
Define
\[
\varepsilon^{\mathrm{endo}}_{i,j,t}
:=
(F_{i,t}-F_{j,t})-\gamma(1-e_t).
\]
Then the preceding inequality is exactly
$U_{i,t}-U_{j,t}\ge \varepsilon^{\mathrm{endo}}_{i,j,t}$, as claimed. Hence, proved.

\end{proof}

\begin{proof}[Proof of Theorem~\ref{thm:archive_suppression_subg}]
Fix $j\notin\mathcal S$ and write $i=i(j)$. The Gibbs weights satisfy the exact
pairwise identity
\[
\frac{w_{j,t}}{w_{i,t}}
=\frac{w_{j,T_0}}{w_{i,T_0}}
\exp\!\left(\beta\eta_p\sum_{\tau=T_0}^{t-1}
(\widehat U_{j,\tau}-\widehat U_{i,\tau})\right).
\]
Since $\sum_{k\in\mathcal S}w_{k,t}\ge w_{i,t}$ and
$U_{j,\tau}-U_{i,\tau}=-\varepsilon_{j,\tau}$,
\[
\frac{w_{j,t}}{\sum_{k\in\mathcal S}w_{k,t}}
\le \frac{w_{j,T_0}}{w_{i,T_0}}
\exp\!\left(-\beta\eta_p\sum_{\tau=T_0}^{t-1}\varepsilon_{j,\tau}
+\beta\eta_p\sum_{\tau=T_0}^{t-1}(\xi_{j,\tau}-\xi_{i,\tau})\right).
\]
For a fixed ordered pair, decompose the noise difference into its conditional
mean and centered martingale difference. The conditional means sum to at most
$2\mu(t-T_0)$. The conditional $(2\sigma)$-sub-Gaussian MGF assumption and a
Chernoff bound give, for $t>T_0$,
\[
\mathbb P\!\left(\sum_{\tau=T_0}^{t-1}d^{(i,j)}_\tau\ge a\right)
\le \exp\!\left(-\frac{a^2}{8\sigma^2(t-T_0)}\right).
\]
A union bound over the at most $m^2$ ordered pairs and
$T-T_0$ positive-length horizons, with the zero-length case handled
deterministically, yields the simultaneous bound $\Gamma_{\mathrm{subG}}$.
Substitution in the preceding display proves the result.
\end{proof}

\begin{proof}[Proof of Corollary~\ref{cor:archive_exponential_suppression}]
Apply Theorem~\ref{thm:archive_suppression_subg} and take logarithms. Split the
cumulative gap at $T_1$ and use the assumed signed-margin bound. For
$i=i(j)$ this gives
\[
\log\frac{w_{j,t}}{\sum_{k\in\mathcal S}w_{k,t}}
\le C_j-\beta\eta_p\varepsilon(t-T_1)
+\beta\eta_p\Gamma_{\mathrm{subG}}(t,\delta),
\]
where
\[
C_j:=\log\frac{w_{j,T_0}}{w_{i,T_0}}
-\beta\eta_p\sum_{\tau=T_0}^{T_1-1}\varepsilon_{j,\tau}
\]
depends only on the score state and finite history through $T_1$. If $\mu=0$,
$\Gamma_{\mathrm{subG}}(t,\delta)=O(\sqrt{t\log T})$ for fixed horizon $T$,
whereas the signed signal is linear in $t$.
\end{proof}

\subsection{Proof for Section \ref{sec:learner_tracking}}
\begin{proof}[Proof of Theorem~\ref{thm:sgd_tracking_drift}]
By $L$--smoothness and $\theta_{t+1}=\theta_t-\eta g_t$,
\[
F_t(\theta_{t+1})\le F_t(\theta_t)
-\eta\langle\nabla F_t(\theta_t),g_t\rangle
+\frac{L\eta^2}{2}\|g_t\|^2.
\]
Conditioning on $\mathcal G_t$, unbiasedness and the variance bound imply
\[
\mathbb E[F_t(\theta_{t+1})\mid\mathcal G_t]
\le F_t(\theta_t)-\eta\left(1-\frac{L\eta}{2}\right)
\|\nabla F_t(\theta_t)\|^2+\frac{L\eta^2}{2}\sigma^2.
\]
For $\eta\le1/L$, the gradient coefficient is at least $\eta/2$. Pathwise,
$F_{t+1}(\theta_{t+1})\le F_t(\theta_{t+1})+\delta_t$; hence, after taking
total expectation,
\[
\mathbb E F_{t+1}(\theta_{t+1})
\le \mathbb E F_t(\theta_t)-\frac{\eta}{2}
\mathbb E\|\nabla F_t(\theta_t)\|^2
+\frac{L\eta^2}{2}\sigma^2+\mathbb E\delta_t.
\]
Summing from $t=0$ to $T-1$, telescoping, and using
$F_T(\theta_T)\ge F_{\min}$ gives
\[
\frac{\eta}{2}\sum_{t=0}^{T-1}\mathbb E\|\nabla F_t(\theta_t)\|^2
\le F_0(\theta_0)-F_{\min}+\frac{L\eta^2T}{2}\sigma^2
+\sum_{t=0}^{T-1}\mathbb E\delta_t.
\]
Division by $\eta T/2$ proves the claim.
\end{proof}

\end{document}